\documentclass{article}

% if you need to pass options to natbib, use, e.g.:
%     \PassOptionsToPackage{numbers, compress}{natbib}
% before loading neurips_2022

% ready for submission
% \usepackage{neurips_2023}

% to compile a preprint version, e.g., for submission to arXiv, add add the
% [preprint] option:
%     \usepackage[preprint]{neurips_2022}\documentclass{article}

% if you need to pass options to natbib, use, e.g.:
%     \PassOptionsToPackage{numbers, compress}{natbib}
% before loading neurips_2022

% ready for submission
% \usepackage{neurips_2023}

% to compile a preprint version, e.g., for submission to arXiv, add add the
% [preprint] option:
    % \usepackage[preprint]{neurips_2023}

% to compile a camera-ready version, add the [final] option, e.g.:
    \usepackage[final]{neurips_2023}

% to avoid loading the natbib package, add option nonatbib:
%    \usepackage[nonatbib]{neurips_2023}
\usepackage{natbib}
\setcitestyle{numbers,square}
\usepackage[utf8]{inputenc} % allow utf-8 input
\usepackage[T1]{fontenc}    % use 8-bit T1 fonts
\usepackage{hyperref}       % hyperlinks
\usepackage{url}            % simple URL typesetting
\usepackage{booktabs}       % professional-quality tables
\usepackage{amsfonts}       % blackboard math symbols
\usepackage{nicefrac}       % compact symbols for 1/2, etc.
\usepackage{microtype}      % microtypography
\usepackage{xcolor}         % colors
\usepackage{graphicx}       
\usepackage{float}          
\usepackage{subfigure}      
\usepackage{multirow}
\usepackage{amsmath}
\usepackage{amssymb}
\usepackage{threeparttable}
%self added
\usepackage{bbm} % added by myself
\usepackage{bbding}
\usepackage[english]{babel}
\usepackage{wrapfig}
\RequirePackage[format=plain,labelformat=simple,labelsep=period,font=small,compatibility=false]{caption}

% Support for easy cross-referencing
\usepackage[capitalize]{cleveref}
\crefname{section}{Sec.}{Secs.}
\Crefname{section}{Section}{Sections}
\Crefname{table}{Table}{Tables}
\crefname{table}{Tab.}{Tabs.}

% math
\RequirePackage{amsmath,amssymb,amsfonts,amsthm}
\RequirePackage{commath,cases,bm}
\numberwithin{equation}{section}

\newtheorem{definition}{Definition}

\newtheorem{lemma}{Lemma}
\newtheorem{theorem}{Theorem}

% pseudocode
\RequirePackage[linesnumbered,ruled,longend]{algorithm2e}
\SetKwInOut{Input}{Input}
\SetKwInOut{Output}{Output}

\newcommand{\Ret}{\KwRet}
\SetKwProg{Fn}{Function}{\string:}{end}
\SetKw{and}{ and }
\SetKw{oor}{ or }
\SetAlTitleSty{}
\SetTitleSty{textit}{}

\newenvironment{Algorithm}[1][]{\def\algoname{#1}\bigskip\begin{algorithm}[!htb]}{\caption{\algoname}\end{algorithm}}

\RequirePackage{ifthen}
\def\pbtypecol{black}
\newcommand{\pbtype}[1]{\ifthenelse{\equal{#1}{graph}}{\def\pbtypecol{green}}{\ifthenelse{\equal{#1}{math}}{\def\pbtypecol{orange}}{\ifthenelse{\equal{#1}{combinatory}}{\def\pbtypecol{blue}}{\ifthenelse{\equal{#1}{string}}{\def\pbtypecol{red}}{\ifthenelse{\equal{#1}{network}}{\def\pbtypecol{yellow}}{\ifthenelse{\equal{#1}{ai}}{\def\pbtypecol{cyan}}{\ifthenelse{\equal{#1}{image}}{\def\pbtypecol{magenta}}{\def\pbtypecol{gray}}}}}}}}}

\title{Towards Higher Ranks via Adversarial Weight Pruning}

% The \author macro works with any number of authors. There are two commands
% used to separate the names and addresses of multiple authors: \And and \AND.
%
% Using \And between authors leaves it to LaTeX to determine where to break the
% lines. Using \AND forces a line break at that point. So, if LaTeX puts 3 of 4
% authors names on the first line, and the last on the second line, try using
% \AND instead of \And before the third author name.

\author{%
Yuchuan Tian$^{1}$, Hanting Chen$^{2}$, Tianyu Guo$^{2}$, Chao Xu$^{1}$, Yunhe Wang$^{2}$\thanks{Corresponding Author.}\\
\small$^1$ National Key Lab of General AI, School of Intelligence Science and Technology, Peking University. \\
\small$^2$ Huawei Noah's Ark Lab. \\
\small\texttt{tianyc@stu.pku.edu.cn, \{chenhanting,tianyu.guo,yunhe.wang\}@huawei.com,} \\
\small\texttt{xuchao@cis.pku.edu.cn}
}

\begin{document}

\maketitle

\begin{abstract}
	Convolutional Neural Networks (CNNs) are hard to deploy on edge devices due to its high computation and storage complexities. As a common practice for model compression, network pruning consists of two major categories: unstructured and structured pruning, where unstructured pruning constantly performs better. However, unstructured pruning presents a structured pattern at high pruning rates, which limits its performance. To this end, we propose a Rank-based PruninG (RPG) method to maintain the ranks of sparse weights in an adversarial manner. In each step, we minimize the low-rank approximation error for the weight matrices using singular value decomposition, and maximize their distance by pushing the weight matrices away from its low rank approximation. This rank-based optimization objective guides sparse weights towards a high-rank topology. The proposed method is conducted in a gradual pruning fashion to stabilize the change of rank during training. Experimental results on various datasets and different tasks demonstrate the effectiveness of our algorithm in high sparsity. The proposed RPG outperforms the state-of-the-art performance by 1.13\% top-1 accuracy on ImageNet in ResNet-50 with 98\% sparsity. The codes are available at \url{https://github.com/huawei-noah/Efficient-Computing/tree/master/Pruning/RPG} and \url{https://gitee.com/mindspore/models/tree/master/research/cv/RPG}.
\end{abstract}

\begin{wrapfigure}{R}{0.5\textwidth}
	\begin{center}
	\setlength{\abovecaptionskip}{0cm}
	\centering
	%\fbox{\rule{0pt}{2in} \rule{0.9\linewidth}{0pt}}
	\includegraphics[width=0.9\linewidth]{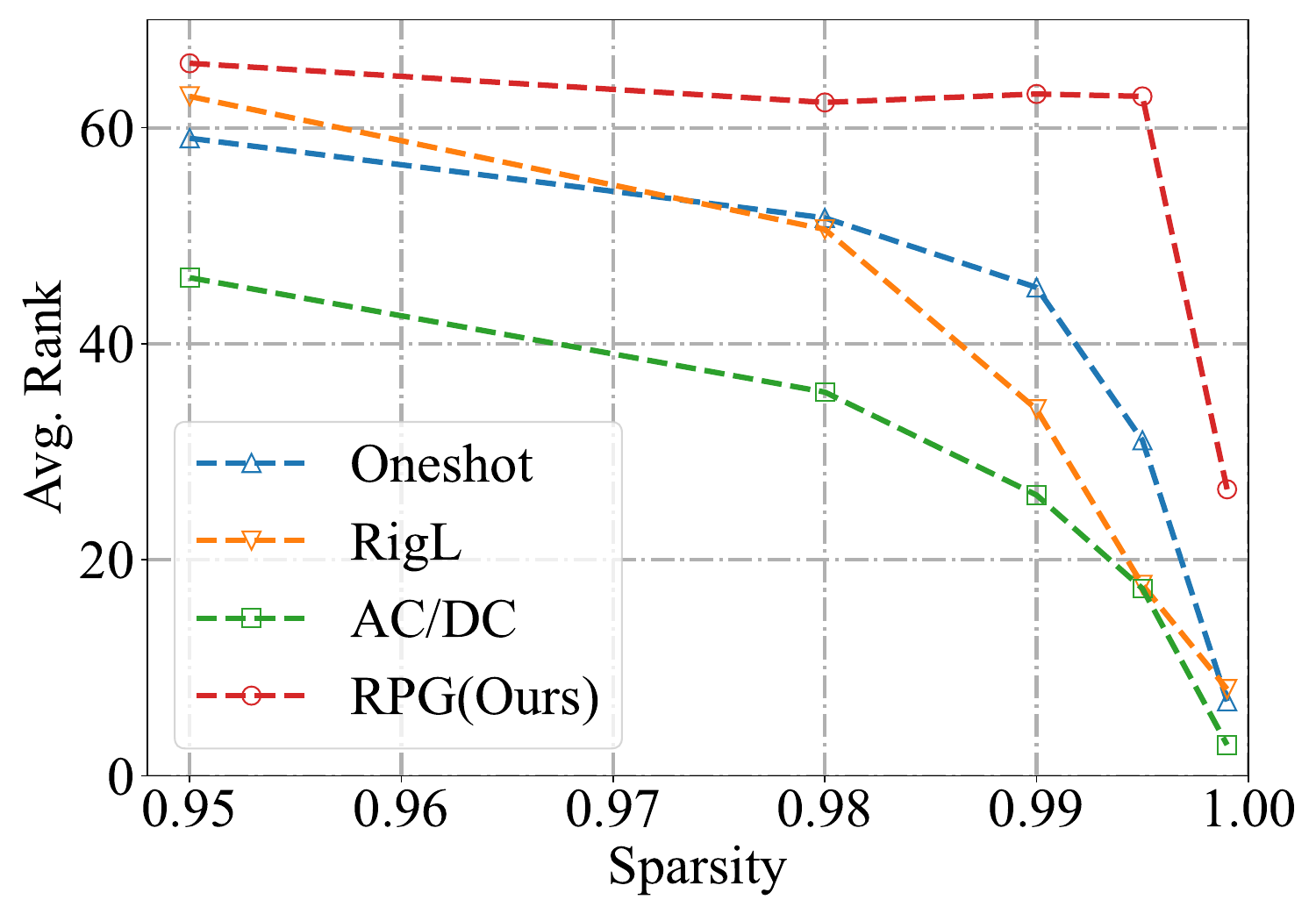}
	
	\caption{Average weight matrix rank of ResNet-32~\cite{resnet} pruning baselines versus sparsity. Our rank-based method is effective in maintaining weight ranks at high sparsities compared with baselines.}
	\label{fig_avgrank}
	\end{center}
\end{wrapfigure}

\section{Introduction\label{sec:intro}}

As Convolutional Neural Networks (CNNs) are adapted to various tasks at better performance, their sizes also explode accordingly. From shallow CNNs like LeNet~\cite{lenet}, larger CNNs like AlexNet~\cite{alexnet}, to deeper modern CNNs like ResNets~\cite{resnet} and DenseNets~\cite{densenet}, CNNs are growing larger for more complex tasks and representations, including large-scale image classification and downstream tasks like object detection~\cite{yolo}, segmentation~\cite{maskrcnn}, etc. The evolution of CNN gives rise to various real-world applications, such as autonomous driving~\cite{app_auto}, camera image processing~\cite{app_dehaze}, optical character recognition~\cite{app_ocr}, and facial recognition~\cite{app_facial}. However, it is difficult to deploy large CNN models on mobile devices since they require heavy storage and computation. For example, deploying a ResNet-50~~\cite{resnet} model costs 8.2G FLOPs for processing a single image with $224\times 224$ size, which is unaffordable for edge devices with limited computing power such as cellphones and drones.

In order to compress heavy deep models, various methodologies have been proposed, including Weight quantization~\cite{prune_pbw,quantECCV2018}, knowledge distillation~\cite{kd_hinton,kd_fitnets}, and network pruning. Network pruning prunes the redundant weights in convolutional neural networks to shrink models. Weight (or unstructured) pruning~\cite{prune_15} and filter (or structured) pruning~\cite{filter_16} are two main pathways to prune CNNs. Weight pruning sparsifies dense kernel weight tensors in convolutional layers in an unstructured manner including iterative pruning~\cite{prune_15}, gradual pruning~\cite{prune_zhu, prune_gale}, and iterative rewinding~\cite{prune_lth1,prune_lth2,prune_lth3}. Some other works~\cite{prune_snip,prune_grasp,prune_woodfisher} propose gradient or hessian based weight saliencies that proved effective in certain scenarios. Filter pruning~~\cite{filter_171,filter_172,filter_173,filter_19,filter_20} prunes filters in convolutional layers as a whole, reducing the redundant width of network layers. 

Although structured pruning algorithms can be well supported by existing hardwares and bring large runtime acceleration benefits, their performance is much lower than that of unstructured pruning. For example, SOTA unstructured pruning methods could achieve 80\% sparsity on ResNet-50 with little performance drop~~\cite{prune_woodfisher,prune_powerprop} while structured pruning could only reach less than 50\%~~\cite{filter_chip}, since filter pruning is a subset of weight pruning by further imposing structural constraints. However, under circumstances of high sparsities, we observe that unstructured pruning partially degrade to structured pruning. When weights are with a large proportion of zeros, it is highly likely that a structured pattern appears, where a whole channel or filter is almost completely pruned. Therefore, existing weight pruning methods usually meet dramatic performance decay at high sparsities.

\begin{wrapfigure}{R}{0.5\textwidth}
	\begin{center}
		\centering
		\includegraphics[width=0.95\linewidth]{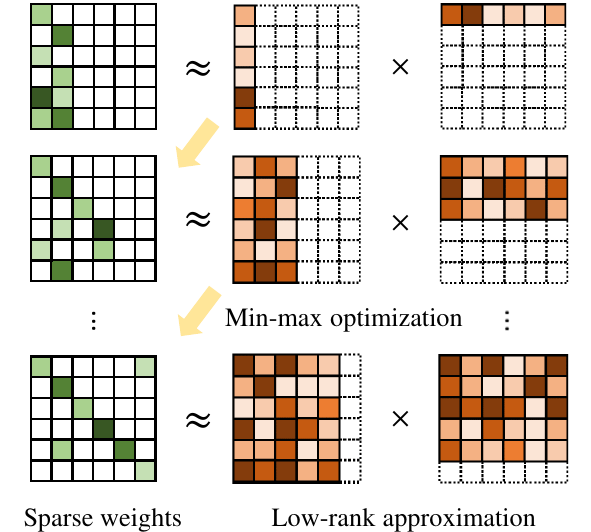}
		
		\caption{An illustrative diagram of our Rank-based Pruning (RPG) method.}
		\label{fig_scheme}
	\end{center}
\end{wrapfigure}

Inspired by the comparison of the two pruning categories, we propose to reduce structural patterns in weight pruning. Structured pruning is factually a reduction of weight rank in deep Convnets. Thus, rank could be adopted as a metric for evaluating the "structuredness" of unstructured sparse weights: a sparse weight is considered highly structured if it possesses low rank. To keep unstructured pruning from being too structured, we hope to maintain weight ranks under high sparsities in pruning. Based on the goal of rank improvement, we propose an adversarial Rank-based PruninG (RPG) approach for unstructured pruning. First, we find a low-rank approximation of the weight by minimizing the approximation error. The best low-rank approximation is found via singular value decomposition. Second, to enhance weight ranks, we maximize the distance between the weight and its low-rank counterpart to increase weight rank. This adversarial rank-based optimization objective guides sparse weights towards a high-rank topology. The proposed method is conducted in a gradual pruning fashion to stabilize the change of rank during training. The advantage of the proposed RPG method is evaluated through extensive experiments on image classification and downstream tasks, and \Cref{fig_avgrank} demonstrates that our method gains a matrix rank advantage compared to baselines.

%-------------------------------------------------------------------------
\section{Maintaining Rank via Adversarial Pruning}

\subsection{Problem Formulation\label{problem_formulation}}
In conventional terms of supervised neural network learning, given a target loss function $\mathcal{L}$, neural network weight $W$, and input output pairs $X=\{x_i\}_{i=1...n}$, $Y=\{y_i\}_{i=1...n}$, weight training of a neural network $W$ is formulated as:
\begin{equation}
	\label{early_form}
\mathop{\arg\min}\limits_{W} \mathcal{L}(Y, WX),
\end{equation}
Weight pruning limits the total number of non-zero weights in weight $W$; or mathematically, weight pruning imposes a $l_0$-norm constraint on neural network learning. Given sparsity budget $c$, the constraint is described as:
\begin{equation}
	\label{constraint}
\|W\|_0 \leq c,
\end{equation}

A common practice is to reparameterize weight $W$ with the Hadamard elementwise product of a weight tensor $W$ and a binary mask $M$. The binary mask $M$ has the same shape as $W$, and each element in $M$ represents whether its corresponding parameter in $W$ is pruned. After reparametrization, the weight pruning problem is then formulated as:
\begin{equation}
	\label{late_form}
\mathop{\arg\min}\limits_{W \odot M} \mathcal{L}(Y, (W \odot M) X)\text{ s.t. }\|M\|_0\leq c.
\end{equation}
In \Cref{late_form}, $\odot$ is the Hadamard elementwise product of matrices.

At high sparsities in unstructured pruning, the rank of sparse networks could decrease substantially. In the following sections, we will demonstrate the problem and propose a solution to maintain sparse weight ranks.

\subsection{Analyzing Weight Pruning in High Sparsity}
Unstructured and structured pruning are two major pruning methodologies. In unstructured pruning practices, weight tensors of CNNs are pruned in a fine-grained manner: each and every solitary weight parameters could be turned off (\textit{i.e.} set to zero) within the network, but the whole weight tensor structure is left unchanged. In contrast, structured pruning focuses on the pruning of filters: filters are cut-off as the smallest prunable unit in the pruning process. Comparing the two pruning paradigms under the same sparsity budget, Zhu and Gupta~\cite{prune_zhu} illustrate that unstructured pruning performs much better than structured pruning under the same pruning budget. 

This phenomenon could be explained from the perspective of matrix ranks. In fact, structured pruning is a direct rank reduce imposed on weight matrices, which means filter pruning is basically weight pruning with low rank. The rank of a matrix represents the upper bound of the amount of information contained in the matrix. A powerful network should be rich in information, and we hope features of the sparse network could have high ranks. Feature ranks is closely related to ranks of sparse weight matrices because of the formula below that describes the relationship of ranks in matrix multiplication:

\begin{equation}
\operatorname{\mbox{Rank}}\left(Y\right) = \operatorname{\mbox{Rank}}\left(W X\right) \leq \min\left(\operatorname{\mbox{Rank}}\left(W\right), \operatorname{\mbox{Rank}}\left(X\right)\right). 
\label{rank_pass}
\end{equation}

According to \Cref{rank_pass}, when filter pruning is applied on weight $W$ that directly impacts its rank, the rank of the output feature will also degrade, causing dramatic loss in information richness. On the other hand, unstructured pruning is free from the structural constraint of filter pruning, and thus maintain more amount of information.

However, under circumstances of high sparsities, we observe that unstructured pruning partially degrades to structured pruning. When weights are filled with a large proportion of zeros, it is very probably that some filters or channels are almost entirely turned-off: "quasi-structured" sparse weight pattern is then formed. A baseline evaluation of matrix ranks in \Cref{fig_avgrank} illustrates this concern. Therefore, existing weight pruning methods usually meet dramatic performance decay at high sparsities. Inspired by the properties of the two categories of pruning, we propose to reduce the structured pattern in unstructured pruning, and therefore to maintain weight ranks under high sparsities.

\subsection{Low Rank Approximation through SVD}

Now that weight ranks are important in weight pruning, we need a practical way to compute ranks in the context of deep neural networks. Previous deep learning works on ranks apply matrix rank theories to CNN low-rank tensor decomposition. In these works, low-rank approximations are proposed to fit weight tensors. Denil et al.~\cite{rankNIPS2013} decomposites $W \in \mathbb{R}^{m\times n}$ into the multiplication of $U \in \mathbb{R}^{m\times r}$ and $V \in \mathbb{R}^{r\times n}$ where $r$ is much smaller than $m$ and $n$. $UV$ provides a low-rank approximation (rank bounded by $r$) of weight $W$. Denton et al.~\cite{rankNIPS2014} uses the sum of $k$ rank-one approximations to provide the $k$-rank approximation of a feature tensor. Zhang et al.~\cite{rankCVPR2015} multiplies a low-rank matrix $M$ to weight matrix $W$, and solves the low-rank $M$ with Singular Value Decomposition (SVD). In modern works~\cite{rankECCV2020, rankArxiv2112}, low-rank approximations are widely studied as well. 

Since the weight values are always discrete, as an alternative solution and inspired by low-rank approximation works, we converge to an approximated rank rather than compute a precise rank solution. Hence, we define the approximated rank as following:

\begin{definition}[$\delta$-rank of a matrix]
	\label{delta_rank}
	Given a matrix $W$ and a small error tolarance $\delta>0$, the $\delta$-rank of $W$ is defined as the smallest positive integer $k$ such that there exist a $k$-rank matrix, whose $l_2$ distance to $W$ is smaller than $\delta$.
\end{definition}

In previous works, ranks are evaluated via singular values computed from Singular Value Decomposition (SVD). Zhang et al.~\cite{rankCVPR2015} uses the sum of the top-k PCA eigenvalues to approximate ranks of layer responses; Lin et al.~\cite{hrankCVPR2020} defines rank as the number of non-negligible singular values and does SVD analysis on feature maps; Shu et al.~\cite{svdNIPS2021} performs SVD on attention maps and augment model performance by keeping a fatter tail in the singular value distribution. These discoveries all acknowledge that singular values from SVD estimates ranks of matrices. We also leverage SVD to compute $\delta$-rank as defined in \Cref{delta_rank}. First, we illustrate that SVD could generate the best low-rank approximation:

\begin{theorem}[The best low-rank approximation]
	\label{best_low_rank_approximation}
	Suppose $W$ is decomposed via SVD and yield $W = \sum_{i=1}^{r} \sigma_i u_i v_i^T $ where singular values $\{\sigma_i\}$ are sorted in descending order. Given integer $k<r$, the best $k$-rank approximation of $W$, namely the $k$-rank matrix that has the smallest $l_2$ distance to $W$ is
	$$\widetilde{W} = \sum_{i=1}^{k} \sigma_i u_i v_i^T.$$
\end{theorem}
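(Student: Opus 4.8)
The plan is to recognize this as the classical Eckart--Young--Mirsky theorem and prove it in two halves: first evaluate the approximation error achieved by the truncated SVD $\widetilde{W}$, and then show no matrix of rank $\le k$ can do better. Throughout I read the $l_2$ distance between matrices as the Frobenius norm $\|A\|_F=\sqrt{\sum_{i,j}A_{ij}^2}=\sqrt{\sum_i\sigma_i(A)^2}$; the argument for the spectral norm is a strict simplification of the same steps, which I sketch at the end.

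\textbf{Step 1 (what the candidate attains).} Since $\{u_i\}_{i=k+1}^r$ are orthonormal and $\{v_i\}_{i=k+1}^r$ are orthonormal, $W-\widetilde{W}=\sum_{i=k+1}^r\sigma_i u_i v_i^T$ is itself an SVD, so its nonzero singular values are exactly $\sigma_{k+1},\dots,\sigma_r$. Hence
\[
\|W-\widetilde{W}\|_F^2=\sum_{i=k+1}^r\sigma_i^2 .
\]
It remains to show this value is optimal over all matrices of rank at most $k$.

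\textbf{Step 2 (the lower bound).} Let $B$ be any matrix with $\operatorname{rank}(B)\le k$; the goal is $\|W-B\|_F^2\ge\sum_{i=k+1}^r\sigma_i^2$. The key tool is Weyl's inequality for singular values: for matrices $P,Q$ of the same size and indices $i,j\ge1$, $\sigma_{i+j-1}(P+Q)\le\sigma_i(P)+\sigma_j(Q)$. Apply it with $P=W-B$, $Q=B$, $j=k+1$: because $\operatorname{rank}(B)\le k$ we have $\sigma_{k+1}(B)=0$, so for every $i\ge1$,
\[
\sigma_{k+i}(W)=\sigma_{(i)+(k+1)-1}\bigl((W-B)+B\bigr)\le\sigma_i(W-B)+\sigma_{k+1}(B)=\sigma_i(W-B).
\]
Squaring and summing over $i=1,\dots,r-k$ gives $\sum_{i=k+1}^r\sigma_i(W)^2\le\sum_{i=1}^{r-k}\sigma_i(W-B)^2\le\|W-B\|_F^2$. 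Combined with Step 1, $\widetilde{W}$ is a minimizer, and every rank-$k$ minimizer attains the same error $\sqrt{\sum_{i>k}\sigma_i^2}$.

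\textbf{Main obstacle and remarks.} The only nontrivial ingredient is Weyl's singular-value inequality (equivalently the Courant--Fischer min-max characterization of singular values); the rest is bookkeeping, so that is where I would spend care — either citing it or deriving it from min-max. If I wanted a fully self-contained argument and only needed the spectral-norm version, I would replace Step 2 by a dimension count: $\ker B$ has dimension $\ge n-k$ while $\operatorname{span}\{v_1,\dots,v_{k+1}\}$ has dimension $k+1$, so the two subspaces share a unit vector $z$; then $\|(W-B)z\|=\|Wz\|\ge\sigma_{k+1}$ because $Wz$ is the same combination of $\sigma_1u_1,\dots,\sigma_{k+1}u_{k+1}$ that $z$ is of $v_1,\dots,v_{k+1}$, giving $\|W-B\|_2\ge\sigma_{k+1}=\|W-\widetilde{W}\|_2$. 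Finally I would note that the minimizer is unique only when $\sigma_k>\sigma_{k+1}$; when singular values are tied at that level, any truncation of an SVD at level $k$ works, which is exactly why the statement claims \emph{a} best $k$-rank approximation rather than \emph{the} one.
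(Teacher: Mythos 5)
Your proposal is correct and follows essentially the same route as the paper's proof: compute the truncated-SVD error $\sum_{i>k}\sigma_i^2$, then bound any rank-$\le k$ competitor from below via Weyl's singular-value inequality $\sigma_{k+i}(W)\le\sigma_i(W-B)+\sigma_{k+1}(B)$ and sum the squares. The only cosmetic difference is that you cite Weyl's inequality directly (and sketch the min-max/dimension-count alternative), whereas the paper derives it in-line from the triangle inequality for the spectral norm; your added remarks on non-uniqueness when $\sigma_k=\sigma_{k+1}$ are a correct bonus.
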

The proof of \Cref{best_low_rank_approximation} will be shown in Appendix. Since SVD could yield the best low-rank approximation, we could use this property to solve $\delta$-rank defined in \Cref{delta_rank}. Given weight matrix $W$, we search for the smallest $k$ such that the $l_2$ approximation error of best $k$-rank approximation $\widetilde{W}$ as formulated in \Cref{best_low_rank_approximation} is below the error tolerance $\delta$. In this way, we are able to solve the rank of $W$.

\subsection{Adversarial Optimization for Rank Maintenance\label{rank_maintain}}

Equipped with the method for matrix rank computation, we hope to formulate a target loss function according to this heuristic such that optimization of the loss could maintain weight ranks.

In contrast to low-rank approximations, high-rank matrices should be hard for low-rank matrices to approximate. Assume $S$ is the set of all low-rank matrices, $W$ should keep its distance away from this set $S$ to increase its rank. But this is a hard problem, for we have to figure out all low-rank matrices. To further simplify the problem, we find the best low-rank approximation rather than all low-rank approximations. $W$ should estrange itself from the best low-rank approximation whose distance is the farthest from $W$. This simplification is valid and will be proved later.

Using this heuristic as motivation, we design an adversarial mechanism that increase the difficulty for $W$ to be approximated by low-rank matrices, and consequently to advocate higher matrix ranks of $W$ while pruning. At first, the best low-rank approximation $\widetilde{W}$ of a small rank $k$ is generated via Singular Value Decomposition, for the purpose of minimizing its distance to weight $W$; next, $W$ is optimized to increase the distance from $\widetilde{W}$. The procedures could be understood as an adversarial combat between $W$ and $\widetilde{W}$: as the low-rank $\widetilde{W}$ tries to fit $W$, $W$ is optimized to keep itself far away from $\widetilde{W}$. Mathematically, the combat could be expressed as a min-max problem. 

But unluckily, the problem may suffer the risk of not getting converged. When $\widetilde{W}$ is fixed, the best $W$ is taken when $W \rightarrow \infty$. To resolve this issue during optimization, we constrain $W$ within a euclidean norm ball. In other words, we plug $\frac{W}{\|W\|_F}$ instead of $W$ into the max-min problem. The reasons we use $l_2$ normalization are: 1. $W$ is bounded rather than growing to infinity; 2. the rank of $W$ could increase if we $l_2$ normalize $W$ when optimizing the min-max problem, which will be shown in the mathematical proof in the appendix; 3. $l_2$ normalization on weight is equivalent to imposing $l_2$ normalization on its singular values, providing a fair standard for rank comparisons based on the definition of rank in \Cref{delta_rank} given fixed error tolerance. 

Before the introduction of this min-max problem, we introduce several notations: $\|\cdot  \|_F$ is the Frobenius norm (2-norm) of matrices; $I$ is the identity matrix; $\overline{W}:=\frac{W}{\|W\|} $ is the $l_2$ normalized weight matrix $W$; $U$, $\Sigma$, $V$ are matrices reached from the SVD of $\overline{W}$, where $U=\{u_1, u_2,...\}$ and $V=\{v_1, v_2,...\}$ are orthonormal bases; $\Sigma$ is a diagonal matrix where singular values $\{\sigma_1, \sigma_2,...\}$ are sorted in descending order on the diagonal; operator $\operatorname{Trun}{\left(U\Sigma V^T\right)}= \sum_{i=1}^{k}\sigma_i u_i v_i^T$ stands for $k$-rank truncated SVD, or the $k$-rank best approximation of $\overline{W}$.

Then formally, we express the min-max problem as follows:

\begin{equation}
	\label{minmax}
	\begin{aligned}
	\min_{W} \max_{U,\Sigma,V}  - \|\overline{W} &- \operatorname{Trun}\left(U\Sigma V^T\right)\|^2_{F}, \\
	\text{\quad s.t.\quad} & U^T U = I,\quad V^T V = I,\quad \overline{W} =\frac{W}{\|W\|}. \\
	\end{aligned}
\end{equation}

The optimization target is defined as the adversarial rank loss:

\begin{equation}
	\label{rank_loss_definition}
	\mathcal{L}_{rank} = - \|\overline{W} - \operatorname{Trun}\left(U\Sigma V^T\right)\|^2_{F}.
\end{equation}

In deep learning, gradient descent is the most widely applied method for optimization problems, and we also adopt gradient descent for our experiments. Hence in this context, we propose the following theorem, stating that our adversarial rank loss could guide weight $W$ towards higher rank:

\begin{theorem}[Effectiveness of the adversarial rank loss]
	Given the adversarial rank loss as defined in \Cref{rank_loss_definition}. If we optimize $W$ in rank loss via gradient descent, the rank of $W$ will increase.
\end{theorem}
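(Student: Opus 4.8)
The plan is to eliminate the inner maximization with the Eckart--Young property already established in \Cref{best_low_rank_approximation}, reducing the adversarial rank loss to a purely spectral quantity, then to compute its gradient in $W$ and show that one small gradient step transfers spectral mass from the leading $k$ singular values into the tail --- which, by \Cref{delta_rank}, is exactly what it means for the rank to go up. Concretely, fixing $W$ and evaluating $\max_{U,\Sigma,V}-\|\overline W-\operatorname{Trun}(U\Sigma V^T)\|_F^2$ in \Cref{minmax}, \Cref{best_low_rank_approximation} says the optimal $(U,\Sigma,V)$ is the SVD of $\overline W$ and the optimal truncation is $\widetilde W=\sum_{i\le k}\sigma_iu_iv_i^T$, so $\mathcal{L}_{rank}(W)=-\sum_{i>k}\sigma_i^2$ with $\sigma_1\ge\dots\ge\sigma_r$ the singular values of $\overline W=W/\|W\|_F$. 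Since the $l_2$ normalization forces $\sum_i\sigma_i^2=\|\overline W\|_F^2=1$, this is $\mathcal{L}_{rank}(W)=\big(\sum_{i\le k}\sigma_i^2\big)-1$, so minimizing the rank loss is the same as maximizing the tail mass $E(W):=\sum_{i>k}\sigma_i^2=-\mathcal{L}_{rank}(W)$, and a larger $E(W)$ means precisely that the $\delta$-rank of \Cref{delta_rank} exceeds $k$ for every tolerance with $\delta^2<E(W)$.

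Next I would differentiate. Under the generic spectral-gap condition $\sigma_k>\sigma_{k+1}$ the maximizing truncation $\widetilde W$ is unique, so Danskin's (envelope) theorem lets me compute $\nabla_W\mathcal{L}_{rank}$ by freezing $\widetilde W$ and differentiating $-\|\overline W-\widetilde W\|_F^2=-1+2\langle\overline W,\widetilde W\rangle-\|\widetilde W\|_F^2$; using the derivative of $W\mapsto W/\|W\|_F$ this gives
\[
\nabla_W\mathcal{L}_{rank}=\frac{2}{\|W\|_F}\big(\widetilde W-\alpha\,\overline W\big),\qquad \alpha:=\langle\overline W,\widetilde W\rangle=\sum_{i\le k}\sigma_i^2\in(0,1],
\]
which vanishes exactly when $\alpha=1$, i.e.\ when $\operatorname{rank}(W)\le k$ and there is nothing left to raise.

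Then I would read off the effect of a step. For $\eta$ small relative to the local curvature the descent lemma gives $\mathcal{L}_{rank}(W-\eta\nabla_W\mathcal{L}_{rank})=\mathcal{L}_{rank}(W)-\eta\|\nabla_W\mathcal{L}_{rank}\|_F^2+O(\eta^2)<\mathcal{L}_{rank}(W)$, so by the reduction above the tail mass $E$ strictly increases. Plugging the gradient into the update makes this explicit: $W_{\mathrm{new}}=c_1\widetilde W+c_2\overline W_{>k}$ with $\overline W_{>k}=\sum_{i>k}\sigma_iu_iv_i^T$ and $0\le c_1=\|W\|_F-\tfrac{2\eta(1-\alpha)}{\|W\|_F}<\|W\|_F<\|W\|_F+\tfrac{2\eta\alpha}{\|W\|_F}=c_2$, so $W_{\mathrm{new}}$ keeps the singular subspaces of $W$ while shrinking the top-$k$ singular values and inflating the tail ones, flattening the spectrum; for $\eta$ small the ordering is preserved and the normalized tail mass rises from $1-\alpha$ to $c_2^2(1-\alpha)/(c_1^2\alpha+c_2^2(1-\alpha))>1-\alpha$. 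Choosing $\delta$ with $E(W)<\delta^2<E(W_{\mathrm{new}})$, the $\delta$-rank jumps above $k$, and iterating keeps pushing $W$ toward higher rank until $\operatorname{rank}(W)\le k$ fails.

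I expect the main obstacle to be the differentiability issue in the middle step: the truncated-SVD objective is only directionally differentiable at spectral ties, so making the envelope-theorem reduction and the gradient formula rigorous needs the genericity hypothesis $\sigma_k>\sigma_{k+1}$ (or a subgradient formulation). A secondary caveat worth stating is that for a \emph{finite} step the order of singular values may change, so ``the rank increases'' is cleanest in the small-step / gradient-flow regime or for a suitable window of $\delta$; the quantity that is monotone without caveat is the tail spectral mass $E(W)=-\mathcal{L}_{rank}(W)$.
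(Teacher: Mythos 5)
Your proposal is correct and follows essentially the same route as the paper's own proof: reduce $\mathcal{L}_{rank}$ to $-\sum_{i>k}\sigma_i^2$ of the normalized weight, differentiate through the $l_2$ normalization (your coordinate-free gradient $\tfrac{2}{\|W\|_F}(\widetilde W-\alpha\overline W)$ agrees with the paper's entrywise expression), and observe that one step preserves the singular vectors while shifting normalized spectral mass into the tail, which raises the $\delta$-rank. Your explicit treatment of the $\sigma_k>\sigma_{k+1}$ differentiability caveat and of possible singular-value reordering at finite step size tightens points the paper only mentions in passing, but the argument is the same one.
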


The theorem could be mathematically proved, and the detailed proof will be provided in the appendix. 

With the proposed adversarial rank loss, our optimization objective consists of two goals: 1. we hope to reduce the loss for a certain task (\textit{e.g. }classification, detection, etc.) for good sparse network performance; 2. we hope to reduce rank loss for higher weight ranks. We formulate the Rank-based Pruning objective by doing affine combination of the two goals. Given affine hyperparmeter $\lambda$, the loss for a certain task $\mathcal{L}_{task}$, the adversarial rank loss $\mathcal{L}_{rank}$, the Rank-based Pruning (RPG) objective $\mathcal{L}$ is defined as:
\begin{equation}
	\label{rank_based_objective}
	\mathcal{L}:=\mathcal{L}_{task} + \lambda \mathcal{L}_{rank}.
\end{equation}

\subsection{The Gradual Pruning Framework}

Previous works have proposed various pruning framework, including One-shot Pruning~\cite{prune_15}, Sparse-to-sparse Pruning~\cite{prune_set,prune_deepr,prune_dsr,prune_snfs,prune_rigl}, and Iterative Magnitude Pruning for Lottery Tickets~\cite{prune_lth1,prune_lth3}. Compared with these frameworks, Gradual Pruning (GP)~\cite{prune_zhu} could reach better performance with modest training budget. We adopt Gradual Pruning as the pruning framework, which is a usual practice in many works~\cite{prune_woodfisher,prune_zhou,prune_granet}. GP prunes a small portion of weights once every $\Delta T$ training steps, trying to maintain sparse network performance via iterative "pruning and training" procedures. 

However, it is hard to associate rank loss with Gradual Pruning; we hope the factor of rank could be considered in the choice of weights via the proposed rank loss. Loss gradients are widely-applied weight saliency criteria, because gradient magnitudes reflect the potential importance of pruned weights: if a turned-off weight possesses large gradients with respect to the objective loss function, it is expected for significant contributions to loss reduction~\cite{prune_rigl}. We use periodic gradient-based weight grow similar to previous pruning works~\cite{prune_rigl, prune_granet, prune_svite}, i.e. the weights are periodicly grown at each binary mask update step. But differently, the rank-based pruning objective (defined as \Cref{rank_based_objective}) is used for gradients computation with respect to each model weight in our case. In this way, the rank factor is considered during the selection of active weights: there is a tendency that RPG chooses an active set of weights that features high-rank.

An embedded benefit of periodic gradient-based weight grow lies in computation cost considerations. Singular Value Decomposition (SVD) that is essential for rank computation is costly for large weight tensors. Calculating rank loss for each optimization step is hardly affordable. The adoption of periodic weight updating, however, amortizes the cost of rank loss computations. We also provide an SVD overhead analysis in Sec~\ref{overhead_analysis}.

In summary, Our Rank-based Pruning (RPG) method is formulated as follows: once every $\Delta T$ training steps, the prune-and-grow procedures that updates binary mask $M$ is performed. Firstly, we plan the number of parameters to prune and to grow, such that after mask updating, the whole network will reach the target sparsity at the current iteration. Target sparsity will increase gradually as training goes on, which is identical to GP. Secondly, we globally sort all parameters based on magnitude and perform the pruning operation. Thirdly, we grow the parameters based on gradient. For other training steps, mask $M$ is left unchanged; the active weight values are updated.

Specifically, HRank~\cite{hrankCVPR2020} also leverages matrix rank evaluations in pruning. Our idea is significantly different from HRank~\cite{hrankCVPR2020} in the following aspects: 1. HRank performs filter pruning while our work focuses on weight pruning; 2. HRank evaluates ranks of feature maps, but we evaluate ranks of weight tensors; 3. HRank uses feature rank as filter saliency; our work uses weight rank to guide the update of a sparse network topology.

\section{Experiments\label{experiment}}
Our Rank-based PruninG (RPG) method is evaluated on several behchmarks and proved outstanding among recent unstructured pruning baselines. This section presents the experiment results to empirically prove the effectiveness of our RPG method, especially on high sparsities. First, we will show the results of RPG on two image classification datasets: the comparatively small-scaled CIFAR-10, and the large-scaled ImageNet. Then, we will present the results of RPG on downstream vision tasks. Finally, an ablation study will be given.

\subsection{CIFAR Experiments}
\begin{table*}[t]
	\centering
    \begin{tabular}{lcccccc}
		\toprule
		Models & \multicolumn{3}{c}{VGG19} & \multicolumn{3}{c}{ResNet32}   \\\midrule
  Sparsity & 99\%  & 99.5\% & 99.9\% & 99\%  & 99.5\% & 99.9\% \\\midrule
  Dense & 93.84 &       &       & 94.78  &       &  \\\midrule
  PBW~\cite{prune_pbw} & 90.89  & 10.00  & 10.00  & 77.03  & 73.03  & 38.64  \\
  MLPrune~\cite{prune_mlprune} & 91.44  & 88.18  & 65.38  & 76.88  & 67.66  & 36.09  \\
  ProbMask~\cite{prune_zhou} & 93.38  & 92.65  & 89.79  & 91.79 & 89.34  & 76.87  \\
  AC/DC~\cite{prune_acdc} & 93.35  & 80.38  & 78.91  & \textbf{91.97}  & 88.91  & 85.07  \\
  RPG (Ours) & \textbf{93.62} & \textbf{93.13} & \textbf{90.49} & 91.61  & \textbf{91.14} & \textbf{89.36} \\\bottomrule
  \end{tabular}%
	\caption{Sparsified VGG-19 and ResNet-32 on CIFAR-10. Baseline results are obtained from~\cite{prune_zhou}. }
	\label{cifar10}%
\end{table*}%

\textbf{Experiment settings.} We first compare our RPG pruning method with other methods on CIFAR-10 classification. CIFAR-10 is one of the most widely used benchmark for image classification. It consists of 60000 $32\times 32$ images: 50000 for training, and 10000 for validation. We hope to try our RPG method first on this relatively small dataset and look for heuristic patterns. 

Among the pruning baselines, we choose ProbMask~\cite{prune_zhou} and AC/DC~\cite{prune_acdc}for comparison because these two methods are intended for high-sparsity pruning. Additionally, ProbMask is a recent baseline that provides both CIFAR and ImageNet classification results, enabling us for further investigation on larger-scale datasets. Other baselines including PBW~\cite{prune_pbw} and MLPrune~\cite{prune_mlprune} are earlier conventional pruning baselines for references. For fair comparison, our RPG method is applied to modern CNN structures, \textit{i.e.} VGG-19 and ResNet-32, and prune for 300 epochs, according to the setting of ProbMask~\cite{prune_zhou}. The results are shown in \Cref{cifar10}.

\textbf{Results analysis.} At relatively low sparsities, the gap between recent baselines are small. ProbMask~\cite{prune_zhou}, AC/DC~\cite{prune_acdc}, and RPG all give satisfactory results at $99\%$ compared with early pruning works. But as sparsity further increases, the three methods undergo significant performance decay on either network. At $99.5\%$ and $99.9\%$, our RPG method shows great advantage over the other two baselines. This discovery inspires us further investigate the high-sparsity potential of RPG on the large-scale ImageNet dataset.

\begin{wraptable}{R}{0.45\textwidth}
	\centering
	\setlength\tabcolsep{1.2pt}
	\begin{tabular}{lrr}
		\toprule
		Algorithm & \multicolumn{1}{l}{Sparsity} & \multicolumn{1}{l}{Accuracy} \\\midrule
		ResNet-50~\cite{resnet} & 0     & 76.80  \\\midrule
		STR~\cite{prune_str} & 0.8   & 76.19  \\
		WoodFisher~\cite{prune_woodfisher} & 0.8   & \textbf{76.73} \\
		GraNet~\cite{prune_granet} & 0.8   & 76.00  \\
		AC/DC~\cite{prune_acdc} & 0.8   & 76.30  \\
		PowerPropagation~\cite{prune_powerprop} & 0.8   & 76.24  \\
		RPG (Ours) & 0.8   & \underline{76.66}  \\\midrule
		STR~\cite{prune_str} & 0.9   & 74.31  \\
		WoodFisher~\cite{prune_woodfisher} & 0.9   & \underline{75.26}  \\
		GraNet~\cite{prune_granet} & 0.9   & 74.50  \\
		AC/DC~\cite{prune_acdc} & 0.9   & 75.03  \\
		ProbMask~\cite{prune_zhou} & 0.9   & 74.68  \\
		PowerPropagation~\cite{prune_powerprop} & 0.9   & 75.23  \\
		RPG (Ours) & 0.9   & \textbf{75.80} \\\midrule
		STR~\cite{prune_str} & 0.95  & 70.40  \\
		WoodFisher~\cite{prune_woodfisher} & 0.95  & 72.16  \\
		AC/DC~\cite{prune_acdc} & 0.95  & 73.14  \\
		ProbMask~\cite{prune_zhou} & 0.95  & 71.50  \\
		PowerPropagation~\cite{prune_powerprop} & 0.95  & \underline{73.25}  \\
		RPG (Ours) & 0.95  & \textbf{74.05} \\\midrule
		STR~\cite{prune_str} & 0.98  & 62.84  \\
		WoodFisher~\cite{prune_woodfisher} & 0.98  & 65.55  \\
		AC/DC~\cite{prune_acdc} & 0.98  & \underline{68.44}  \\
		ProbMask~\cite{prune_zhou} & 0.98  & 66.83  \\
		PowerPropagation~\cite{prune_powerprop} & 0.98  & 68.00  \\
		RPG (Ours) & 0.98  & \textbf{69.57} \\\bottomrule
	\end{tabular}%
	\caption{Sparsified ResNet-50 on ImageNet. All results are official reports from the original works. Best and second best results are \textbf{bolded} and \underline{underlined}. \newline}
	\label{resnet50imagenet}%
% \end{table}%
\end{wraptable}
\subsection{ResNet-50 on ImageNet}

\textbf{Experiment settings.} Sparse ResNet-50 networks evaluated on the ImageNet dataset are the most commonly-used and recognized weight pruning benchmarks. ImageNet ISLVRC2012~\cite{imagenet} is a large scale image classification dataset. It contains 1281K images in the training set and 50K images in the validation set. All the images are shaped $224\times 224$ and distributed in 1000 classes. ResNet-50~\cite{resnet} is a medium-size canonical CNN with 25.5M parameters and 8.2G FLOPs, designed for ImageNet classification.

Our RPG method is applied on ResNet-50 under high sparsities: $80\%$, $90\%$, $95\%$, and $98\%$. We compare RPG with recent baselines. Among the baselines, STR~\cite{prune_str} automatically learns pruning sparsity;  WoodFisher~\cite{prune_woodfisher}, GraNet~\cite{prune_granet} and ProbMask~\cite{prune_zhu} are methods based on gradual pruning;  AC/DC~\cite{prune_acdc} and ProbMask~\cite{prune_zhu} are baselines targeted at high sparsities; PowerPropagation~\cite{prune_powerprop} is an improvement of Top-KAST~\cite{prune_topkast} that relies on a pre-set layerwise sparsity distribution. For fair comparison, all results are 100-epoch baselines; we used standard ImageNet configs, detailed in the Appendix. The results are presented in \Cref{resnet50imagenet}. The advantage of adversarial rank-based pruning is manifested at high sparsities.

\textbf{Results analysis.} Our method could achieve outstanding performance for sparsities $90\%$, $95\%$, and $98\%$. At lower sparsities (\textit{e.g.} $80\%$, $90\%$), WoodFisher~\cite{prune_woodfisher} takes the lead among the baselines. Our RPG method is slightly lower than WoodFisher~\cite{prune_woodfisher} by $0.07\%$ in ImageNet accuracy at $80\%$ sparsity. At higher sparsities, our method outcompetes other baselines. Other competitive baselines at high sparsities include PowerPropagation~\cite{prune_powerprop} and AC/DC~\cite{prune_acdc}. However, the gap between our RPG method and these baselines widened at high sparsities. Specificlly, our method outperforms current top baseline by $1.13\%$ of ImageNet Top-1 accuracy at $98\%$ sparsity.

Erdos-Renyi-Kernel (ERK)~\cite{prune_rigl} is a layerwise sparsity distribution that is commonly used for performance boosting in weight pruning methods that require a pre-set sparsity distribution. However, ERK-based sparse models are computationally costly. Differently, RPG automatically maintains a more balanced sparsity throughout the whole network under the same total sparsity constraint. Though our sparse model slightly lags behind the current ERK variant of SOTA~\cite{prune_powerprop} under lower sparsities in certain accuracy, it is much cost-effective. Quantatitively, for $80\%$ sparse ResNet-50, the reported ERK-based State-of-the-Art ImageNet accuracy is merely $0.10\%$ higher than our RPG method (reaching $76.76\%$ for~\cite{prune_powerprop}), but costing an extra $58\%$ of FLOPs. The advantage of our RPG method over ERK-based methods is clearly illustrated in \Cref{fig_infflops}, where we compare RPG with the ERK variant of TOP-KAST~\cite{prune_topkast} and the State-of-the-Art PowerPropagation~\cite{prune_powerprop}. 

DeepSparse~\cite{deepsparse} is a recent sparse acceration framework on CPU that makes unstructured-sparse network accerlation possible in applications. We time sparse ResNet-50 on DeepSparse for single-image inference. Results in Table~\ref{sparsespeed} shows that highly-sparse ResNet-50 could achieve around $2\times$ accerlation on CPU. This observation reveals that highly unstructured-sparse networks have promising applicative prospects on edge devices that could not afford power and cost-intensive GPUs, e.g. micro robots, wearable devices, et cetera. These devices feature limited memory and power, but high inference speed demands. In this sense, our RPG unstructured pruning method is of great application value.

\begin{wrapfigure}{R}{0.5\textwidth}
% \begin{figure}[htbp]
	\begin{center}
	\centering
	\includegraphics[width=0.95\linewidth]{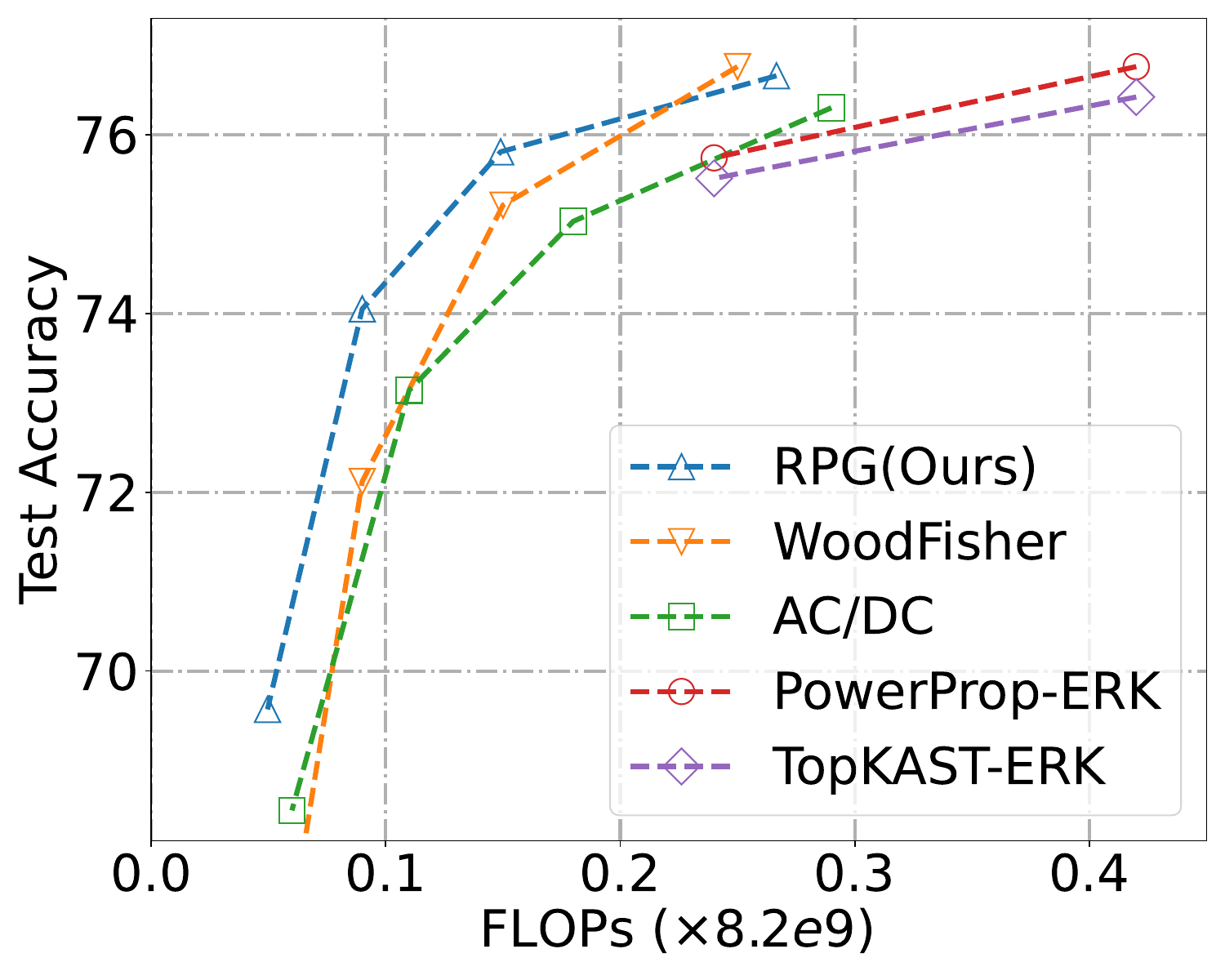}
	
	\caption{ImageNet accuracy versus FLOPs on sparse ResNet-50. Our method achieves better Accuracy-FLOPs trade-off compared with competitive pruning baselines, especially at high sparsities.}
	\label{fig_infflops}
  \end{center}
% \end{figure}
\end{wrapfigure}

\subsection{Downstream Vision Tasks}
We also test our weight pruning method on downstream vision tasks. Mask R-CNN~\cite{maskrcnn} is a widely used benchmark for conventional downstream tasks, namely, object detection and instance segmentation. We try to apply our weight pruning method to Mask R-CNN and compare its detection and segmentation performance against other pruning baselines. As for the choice of baselines, we found that limited weight pruning works conducted experiments on downstream vision tasks. We choose the following baselines for comparison: RigL~\cite{prune_rigl} is a commonly used sparse-to-sparse baseline. AC/DC~\cite{prune_acdc} is good at high-sparsity pruning on ImageNet classification. All methods are applied on Mask R-CNN ResNet-50 FPN variants to measure the mAP for bounding boxes and segmentation masks. 

For all Mask R-CNN experiments, we follow the official training of COCO $1\times$~\cite{maskrcnn}: pruning and finetuning lasts for 90K iterations in total. The pruning results evaluated on COCO val2017 are illustrated in \Cref{maskrcnncoco}. Similar to the trend in classification experiments, our RPG method gains an advantage at high sparsities compared with AC/DC~\cite{prune_acdc}. As sparsity increases from $70\%$ to $80\%$, the gap between AC/DC and RPG widens from $1.0$ to nearly $2.0$ for both detection and segmentation mAPs. This finding shows that RPG is a weight pruning method that could be generalized to various vision tasks: it always works well at high sparsities without the need for significant modifications.

\begin{table}[!t]
	% \footnotesize
	\centering
\begin{minipage}{0.25\textwidth}
		\centering
		\makeatletter\def\@captype{table}
		\captionsetup{type=table}
		\setlength{\abovecaptionskip}{0.1cm}
		\setlength\tabcolsep{2.5pt}
		\begin{tabular}{ccc}
		\toprule
		\multicolumn{1}{l}{Sp.} & \multicolumn{1}{l}{Acc.} & Runtime \\\midrule
		Dense     & 76.80  &40.25ms \\\midrule
		0.8    & 76.66 & 39.26ms \\
		0.9    & 75.80 & 27.98ms \\
		0.95    & 74.05 & 22.20ms \\
		0.98    & 69.57 & 20.89ms \\\midrule
		\end{tabular}%
		\caption{Sparse acceleration of sparse ResNet-50 on DeepSparse. Unstructured pruning could bring \textbf{2}$\times$ acceleration effects on CPU at high sparsities.}
		\label{sparsespeed}
\end{minipage}\qquad
\begin{minipage}{0.35\textwidth}
	\setlength\tabcolsep{1.2pt}
	\centering
	\makeatletter\def\@captype{table}\makeatother
	\captionsetup{type=table}
  \setlength{\abovecaptionskip}{0.1cm}
    \begin{tabular}{cccc}
		\toprule
		Algorithm & \multicolumn{1}{l}{Sp.} & \multicolumn{1}{l}{BoxAP} & \multicolumn{1}{l}{MaskAP} \\\midrule
		Mask R-CNN & 0    & 38.6  & 35.2 \\\midrule
		RigL~\cite{prune_rigl} & 0.5   &  36.4   &  32.8 \\
		AC/DC~\cite{prune_acdc} & 0.5   & \textbf{37.9} & \textbf{34.6} \\
		RPG (Ours) & 0.5   & 37.7  & 34.4 \\\midrule
		RigL~\cite{prune_rigl} & 0.7   & 32.3  & 29.1 \\
		AC/DC~\cite{prune_acdc} & 0.7   & 36.6  & 33.5 \\
		RPG (Ours) & 0.7   & \textbf{37.6} & \textbf{34.4} \\\midrule
		RigL~\cite{prune_rigl} & 0.8   & 26.0 & 23.7 \\
		AC/DC~\cite{prune_acdc} & 0.8   & 34.9  & 32.1 \\
		RPG (Ours) & 0.8   & \textbf{37.1} & \textbf{33.8} \\\bottomrule
		\end{tabular}%
	\caption{Mask R-CNN pruning on COCO val2017. "Sp." stands for model sparsity. Best results are \textbf{bolded}.}
	\label{maskrcnncoco}%
% \end{table}%
% \end{wraptable}%
\end{minipage}\qquad
% % \begin{wraptable}{R}{0.4\textwidth}
\begin{minipage}{0.25\textwidth}
	\setlength\tabcolsep{1.2pt}
	\centering
	\makeatletter\def\@captype{table}\makeatother
	\captionsetup{type=table}
  \setlength{\abovecaptionskip}{0.1cm}
	\begin{tabular}{ccc}
		\toprule
		Algorithm & \multicolumn{1}{l}{Sp.} & Accuracy \\\midrule
		DeiT-S~\cite{deit} & 0     & \multicolumn{1}{r}{79.85} \\\midrule
		% RigL~\cite{prune_rigl} & 0.5   & \multicolumn{1}{r}{79.39} \\
		SViT-E~\cite{prune_svite} & 0.5   & \multicolumn{1}{r}{79.72} \\
		AC/DC~\cite{prune_acdc} & 0.5   & \multicolumn{1}{r}{\textbf{80.15}} \\
		RPG (Ours) & 0.5   & \multicolumn{1}{r}{\textbf{80.15}} \\\midrule
		% RigL~\cite{prune_rigl} & 0.6   & \multicolumn{1}{r}{78.52} \\
		SViT-E~\cite{prune_svite} & 0.6   & \multicolumn{1}{r}{79.41} \\
		AC/DC~\cite{prune_acdc} & 0.6   & \multicolumn{1}{r}{79.69} \\
		RPG (Ours) & 0.6   & \multicolumn{1}{r}{\textbf{79.89}} \\\midrule
		AC/DC~\cite{prune_acdc} & 0.8   & \multicolumn{1}{r}{76.24} \\
		RPG (Ours) & 0.8   & \multicolumn{1}{r}{\textbf{77.42}} \\\bottomrule
	\end{tabular}%
	\caption{Sparse DeiT-S on ImageNet. "Sp." stands for model sparsity. The best results are \textbf{bolded}.}
	\label{deit_imagenet}%
\end{minipage}
% \end{table}%
% \end{wraptable}
% \end{minipage}
\end{table}

\subsection{Vision Transformers}

Recent works on vision model architectures focus on transformers~\cite{vit,deit}. Transformer architecture models are proven particularly effective on large-scale image recognition tasks and are well applied to various downstream tasks~\cite{detr,setr,ipt}, but they are still struggling for industrial applications due to large model size and computation cost. To address these problems, works like SViT-E~\cite{prune_svite} attempted to apply unstructured pruning on vision transformers.

Though our method is not specifically designed for models with the attention mechanism, we explore the effect of our weight pruning method on DeiT-S~\cite{deit} and compare it with high-sparsity weight pruning baseline~\cite{prune_acdc} and the transformer pruning baseline~\cite{prune_svite} in Table \ref{deit_imagenet}. For fair comparison, all pruning experiments follow the setting of SViT-E~\cite{prune_svite}: the DeiT-S model is pruned for 600 epochs on ImageNet~\cite{imagenet}. All other settings are identical the official training setting of~\cite{deit}, including batchsize, learning rate, etc.

\subsection{Ablations\label{ablations}}

\begin{wrapfigure}{R}{0.4\textwidth}
	\begin{center}
	  \centering
	  %\fbox{\rule{0pt}{2in} \rule{0.9\linewidth}{0pt}}
	  \includegraphics[width=0.98\linewidth]{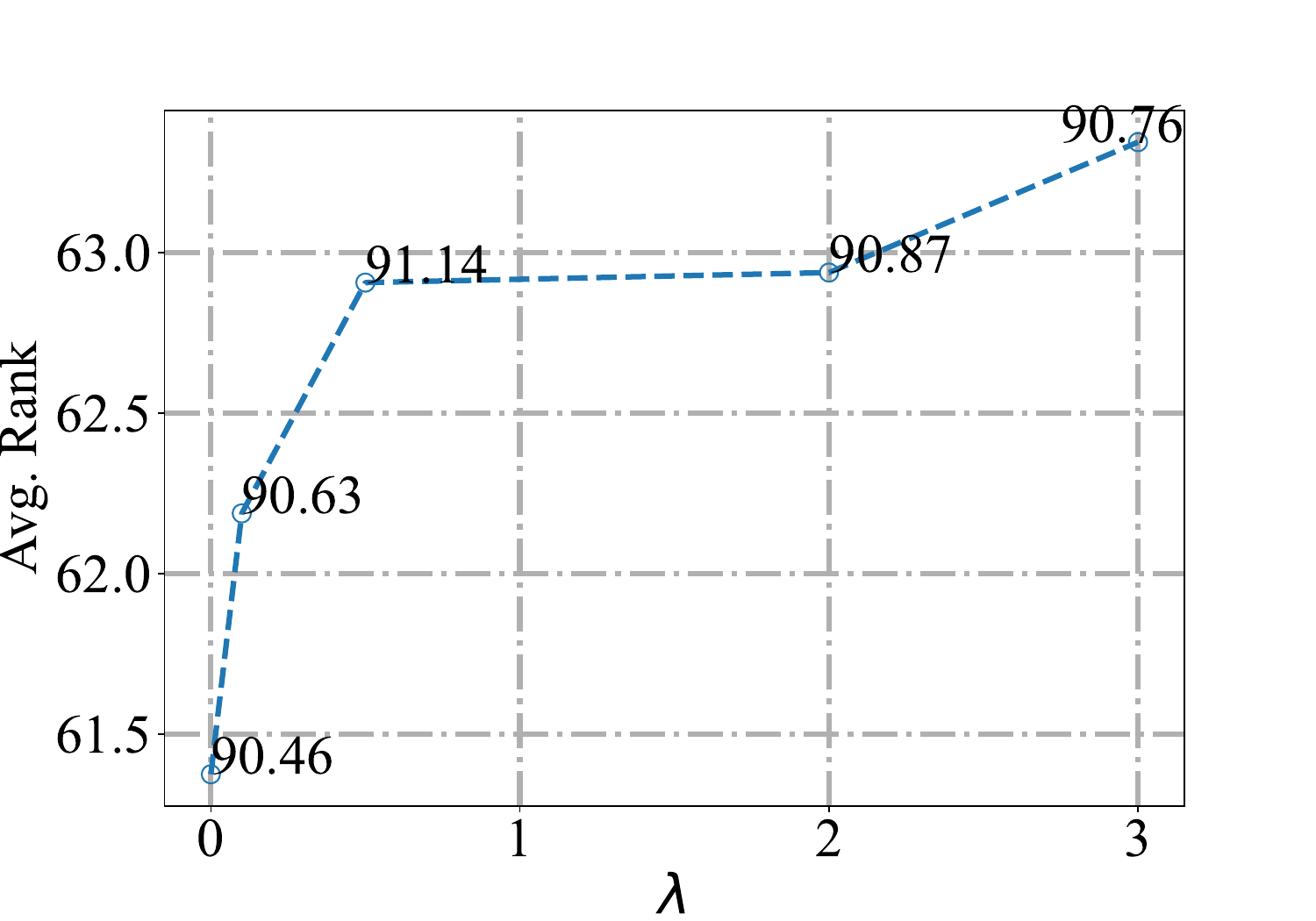}
	  
	  \caption{Average weight matrix rank of ResNet-32~\cite{resnet} versus affine hyperparameter $\lambda$. Accuracies on CIFAR-10 are marked.}
	  \label{fig_hyperlamb}
	\end{center}
  \end{wrapfigure}

In this section, we inspect the effect of rank loss. The rank-based pruning objective involves an affine parameter $\lambda$ that controls the amount of rank loss with respect to the original task loss. When $\lambda=0$, rank loss is turned off. Investigating the relations of rank versus $\lambda$ and accuracy versus $\lambda$ on a ResNet-32 of 99.5\% sparsity as shown in \Cref{fig_hyperlamb}, we found rank loss could significantly increase the average rank throughout all layers of the sparse network. A substantial increase of accuracy is also observed. But as $\lambda$ further increases, the average rank will be saturated. Reversely, as $\lambda$ further increases, the classification accuracy will decrease. This could be attributed to the property of affine combination in \Cref{rank_based_objective}. When $\lambda$ is large, the pruning objective will pay too much attention to maintain weight ranks and neglect the goal of performing the task well. Hence, it is necessary to tune $\lambda$ and find the most optimal one.

\noindent\begin{minipage}{0.95\textwidth}
   \footnotesize
	\centering
	\begin{minipage}[t]{0.4\textwidth}
      \centering
		\makeatletter\def\@captype{table}
		\captionsetup{type=table}
      \setlength{\abovecaptionskip}{0.1cm}
      \begin{tabular}{ccc}
         \toprule
         Type & \multicolumn{1}{l}{Time} & \multicolumn{1}{l}{FLOPs} \\\midrule
         SVD   & 16.5min   & 5.07e15 \\
         RPG90\% & 1003min & 1.34e18 \\\bottomrule
      \end{tabular}%
      \caption{SVD overhead compared with the overall pruning \& finetuning cost of RPG on 90\% sparse ResNet-50.}
		\label{rebabl1}
	\end{minipage}\qquad
	\begin{minipage}[t]{0.5\textwidth}
	   \centering
		\makeatletter\def\@captype{table}
		\captionsetup{type=table}
      \setlength{\abovecaptionskip}{0.1cm}
	  \begin{tabular}{ccc}\toprule
		Baseline & Sparsity & Train FLOPs \\\midrule
			ResNet-50  & \multicolumn{1}{l}{(Dense)} & 3.14e18 \\\midrule
		AC/DC & 0.9  & 0.58$\times$ \\
		PowerProp. & 0.9  & 0.49$\times$ \\
		RPG(Ours)   & 0.9 & \textbf{0.43}$\times$ \\\bottomrule
		\end{tabular}%
    %   \vspace{2pt}
	   \caption{Training FLOPs comparison with pruning baselines on sparse ResNet-50.}
       \label{newbaseline}%
	\end{minipage}
\end{minipage}

\subsection{Overhead Analysis\label{overhead_analysis}}

As introduced in \Cref{rank_maintain}, RPG involves costly SVD calculations. However, we conduct experiments and illustrate that SVD accounts for very minimal cost overhead during pruning in terms of both time and FLOPs. As shown in \Cref{rebabl1}, the overall time and FLOPs for SVD calculations only accounts for $<2\%$ of the whole RPG pruning cost. We also compare the FLOPs overhead of RPG with other pruning methods. Observing from Table ~\ref{newbaseline}, our method is the most cost-effective compared with baselines. Above all, the extra overhead brought by rank loss calculations is not a concern.

\section{Conclusion\label{conclusion}}
This paper proposes the Rank-based Pruning (RPG) method. We investigate weight pruning from a matrix rank perspective, and yield the observation that higher ranks of sparse weight tensors could yield better performance. Based on this heuristic, the adversarial rank loss is designed as the optimization objective that guides the mask update process during pruning. In this manner, our method prunes weight tensors in a rank-favorable fashion. Our RPG method is experimented on various settings and outperforms various baselines.

\section{Limitations}
Unstructured pruning has limited acceleration effect on GPU devices. New GPU architectures or GPU sparse acceleration supports are needed to exert the speed potential of our unstructured pruning method on GPUs.

\subsection*{Acknowledgement}
This work is supported by National Key R\&D Program of China under Grant No.2022ZD0160304. We gratefully acknowledge the support of MindSpore~\cite{mindspore}, CANN (Compute Architecture for Neural Networks) and Ascend AI Processor used for this research. We also gratefully thank Yuxin Zhang and Xiaolong Ma for their generous help.

{\small
	\bibliographystyle{plain}
	\bibliography{ref.bib}
}
\newpage
\appendix
\section*{Appendix}
\section{Proofs of Theorems}
\subsection{Proof of Theorem 1}
\begin{theorem}[The best low-rank approximation]
	% \label{best_low_rank_approximation}
	Suppose $W$ is decomposed via SVD and yield $W = \sum_{i=1}^{r} \sigma_i u_i v_i^T $ where singular values $\{\sigma_i\}$ are sorted in descending order. Given integer $k<r$, the best $k$-rank approximation of $W$, namely the $k$-rank matrix that has the smallest $l_2$ distance to $W$ is
	$$\widetilde{W} = \sum_{i=1}^{k} \sigma_i u_i v_i^T.$$
	This theorem is also the Eckart-Young-Mirsky Theorem for Frobenius norm.
\end{theorem}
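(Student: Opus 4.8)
The plan is to reduce the minimization over all rank-$\le k$ matrices to a clean spectral problem, using only the Pythagoras identity for the Frobenius norm together with one eigenvalue inequality. Write $\sigma_1\ge\sigma_2\ge\dots\ge\sigma_r>0$ for the singular values of $W$, so that $\|W\|_F^2=\sum_{i=1}^r\sigma_i^2$, and use $\|\cdot\|_F$ for the $\ell_2$ (Frobenius) matrix norm throughout.

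First I would reduce to projections. For an arbitrary $B$ with $\operatorname{rank}(B)\le k$, let $P$ be the orthogonal projector onto $\operatorname{col}(B)$, so $\operatorname{rank}(P)\le k$ and $PB=B$. Splitting $W-B=(I-P)W+(PW-B)$, the columns of $(I-P)W$ lie in $\operatorname{col}(P)^\perp$ while the columns of $PW-B$ lie in $\operatorname{col}(P)$, so their Frobenius inner product vanishes and Pythagoras gives
\[
\|W-B\|_F^2=\|(I-P)W\|_F^2+\|PW-B\|_F^2\ \ge\ \|(I-P)W\|_F^2,
\]
with equality at $B=PW$, which indeed has rank $\le k$. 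A second application of Pythagoras to $W=(I-P)W+PW$ yields $\|(I-P)W\|_F^2=\|W\|_F^2-\|PW\|_F^2$, so the remaining task is to maximize $\|PW\|_F^2$ over orthogonal projectors $P$ of rank $\le k$.

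Next I would solve that maximization spectrally. Writing $P=\sum_{j=1}^k q_jq_j^T$ for an orthonormal system $\{q_j\}$, one has $\|PW\|_F^2=\operatorname{tr}(PWW^TP)=\operatorname{tr}(PWW^T)=\sum_{j=1}^k q_j^T(WW^T)q_j$. The matrix $A:=WW^T$ is symmetric positive semidefinite with eigenvalues $\sigma_1^2\ge\dots\ge\sigma_r^2\ge 0$ and eigenvectors $u_1,\dots,u_r$, so the Ky Fan maximum principle (the sum of $k$ Rayleigh quotients over an orthonormal frame is at most the sum of the $k$ largest eigenvalues) gives $\|PW\|_F^2\le\sum_{j=1}^k\sigma_j^2$, with equality for $q_j=u_j$, i.e. for $P^\star=\sum_{j=1}^k u_ju_j^T$. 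Assembling the pieces, $\min_{\operatorname{rank}(B)\le k}\|W-B\|_F^2=\|W\|_F^2-\sum_{j=1}^k\sigma_j^2=\sum_{i=k+1}^r\sigma_i^2$, attained at $B^\star=P^\star W=\bigl(\sum_{j=1}^k u_ju_j^T\bigr)\bigl(\sum_{i=1}^r\sigma_iu_iv_i^T\bigr)=\sum_{j=1}^k\sigma_ju_jv_j^T=\widetilde W$ by orthonormality of the $u_i$; uniqueness of the minimizer holds whenever $\sigma_k>\sigma_{k+1}$, since then the top-$k$ eigenspace of $A$ is unique.

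The step I expect to be the main obstacle — the only ingredient that is not a one-line computation — is the Ky Fan eigenvalue-sum inequality invoked in the spectral step; I would either prove it by a short induction on $k$ from the Courant--Fischer min-max characterization, or cite it. As an alternative route that sidesteps Ky Fan entirely, I could instead apply Weyl's inequality for singular values, $\sigma_{a+b-1}(X+Y)\le\sigma_a(X)+\sigma_b(Y)$, to the splitting $W=(W-B)+B$ with $b=k+1$ (so $\sigma_{k+1}(B)=0$): this gives $\sigma_m(W-B)\ge\sigma_{m+k}(W)$ for every $m\ge 1$, and squaring and summing over $m$ yields $\|W-B\|_F^2\ge\sum_{i>k}\sigma_i^2$ directly, again with equality at $\widetilde W$.
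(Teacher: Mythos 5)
Your main argument is correct, and it takes a genuinely different route from the paper's. The paper proves the bound $\|W-X\|_F^2\ge\sum_{i>k}\sigma_i^2$ purely on the singular-value side: it establishes the Weyl-type inequality $\sigma_i(W')+\sigma_j(W'')\ge\sigma_{i+j-1}(W'+W'')$ from subadditivity of the spectral norm, specializes it to $W=(W-X)+X$ with $\sigma_{k+1}(X)=0$ to get $\sigma_i(W-X)\ge\sigma_{k+i}(W)$, and sums the squares --- which is precisely the ``alternative route'' you sketch in your last sentence. Your primary route instead works on the subspace side: you reduce the competitor $B$ to the projection $PW$ via the Pythagoras identity, convert the problem to maximizing $\operatorname{tr}(PWW^T)$ over rank-$k$ orthogonal projectors, and finish with the Ky Fan maximum principle. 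The two approaches trade one classical eigenvalue fact for another (Weyl's singular-value inequality versus Ky Fan / Courant--Fischer), and both need that external ingredient proved or cited; what your projection route buys is that it exhibits the minimizer constructively as $P^\star W$ and makes the uniqueness discussion (when $\sigma_k>\sigma_{k+1}$) transparent, whereas the paper's route gives the lower bound for all competitors at once with no geometry. One small point of care in your version: when $\operatorname{rank}(B)<k$ the projector $P$ has fewer than $k$ columns in its frame, but the Ky Fan bound only improves, so the argument goes through; and note the equality case $B=PW$ does lie in the feasible set since $\operatorname{rank}(PW)\le\operatorname{rank}(P)\le k$. Both of your routes are sound.
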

Though the proof is easily accessible (\textit{e.g.} from Wikipedia \footnote{https://en.wikipedia.org/wiki/Low-rank\_approximation}), we will provide a sketch of proof here for reference. 

\begin{proof}
Denote $W_k = \sum_{i=1}^{k} \sigma_i u_i v_i^T$ be the best $k$-rank approximation of $W$, $\sigma_i(W)$ be the $i^{th}$ largest singular value of $W$. Then the low-rank approximation error could be reduced as follows:
\begin{equation}
	% \label{simp_error}
	\left\|W-\widetilde{W}\right\|_F^2=\left\|\sum_{i=k+1}^r \sigma_i u_i v_i^T\right\|_F^2=\sum_{i=k+1}^r \sigma_i^2.
\end{equation}

Given $W=W^{\prime} + W^{\prime\prime}$, according to the triangle inequality of spectral norm, 
\begin{equation}
	\label{p11}
	\sigma_1(W) = \sigma_1(W^{\prime}) + \sigma_1(W^{\prime\prime}).
\end{equation}
Then for two arbitrary ranks $i, j\geq 1$, we have:
\begin{equation}
	\label{p12}
\begin{aligned}
	\sigma_i\left(W^{\prime}\right)+\sigma_j\left(W^{\prime \prime}\right) &=\sigma_1\left(W^{\prime}-W_{i-1}^{\prime}\right)+\sigma_1\left(W^{\prime \prime}-W_{j-1}^{\prime \prime}\right) \\
	& \geq \sigma_1\left(W-W_{i-1}^{\prime}-W_{j-1}^{\prime \prime}\right) \\
	& \geq \sigma_1\left(W-W_{i+j-2}\right) \\
	&=\sigma_{i+j-1}(W) .
	\end{aligned}
\end{equation}

Assume there is another $k$-rank approximation $X$, Then according to the above formula, for arbitrary $i\geq 1$,
\begin{equation}
\label{p13}
\sigma_i (W-X) = \sigma_i (W-X) + \sigma_{k+1}(X) \geq \sigma_{k+i} (W)
\end{equation}

Hence,

\begin{equation}
	\left\|W-X\right\|_F^2 \geq \sum_{i=1}^n \sigma_i\left(W-X\right)^2 \geq \sum_{i=k+1}^n \sigma_i^2,
\end{equation}
which means $\widetilde{W}$ is the best $k$-rank approximation. 
\end{proof}

\subsection{Proof of Theorem 2}
Theorem 2 states the effectiveness of rank loss. Before the theorem, recall notations that we previously defined: $\overline{W}:=\frac{W}{\|W\|} $ is the $l_2$ normalized weight matrix $W$; $U$, $\Sigma$, $V$ are matrices reached from the SVD of $\overline{W}$, where $U=\{u_1, u_2,...\}$ and $V=\{v_1, v_2,...\}$ are orthonormal bases; $\Sigma$ is a diagonal matrix where singular values $\{\sigma_1, \sigma_2,...\}$ are sorted in descending order on the diagonal; operator $\operatorname{Trun}{\left(U\Sigma V^T\right)}= \sum_{i=1}^{k}\sigma_i u_i v_i^T$ stands for $k$-rank truncated SVD, or the $k$-rank best approximation of $\overline{W}$ according to \Cref{best_low_rank_approximation}.
\begin{theorem}[Effectiveness of the adversarial rank loss]
	Given the adversarial rank loss
	\begin{equation}
		% \label{rank_loss_definition}
		\mathcal{L}_{rank} = - \|\overline{W} - \operatorname{Trun}\left(U\Sigma V^T\right)\|^2_{F}.
	\end{equation}
	If we optimize $W$ in rank loss via gradient descent, the rank of $W$ will increase.
\end{theorem}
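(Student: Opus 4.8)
The plan is to collapse the inner maximization in \Cref{minmax} to a purely spectral quantity, read off exactly what one gradient-descent step does to the singular values of $W$, and then use \Cref{best_low_rank_approximation} together with \Cref{delta_rank} to translate ``flatter spectrum'' into ``larger $\delta$-rank''. The single identity that drives everything is that $-\mathcal{L}_{rank}$ equals the squared Frobenius distance from the normalized weight $\overline W$ to the set of matrices of rank at most $k$.

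\textbf{Reducing the loss.} The inner maximization ranges over all orthonormal $U,V$ and diagonal $\Sigma$, so it is exactly the search for the best rank-$k$ approximation of $\overline W$; by \Cref{best_low_rank_approximation} its optimizer is $\operatorname{Trun}(U\Sigma V^T)=\widetilde W:=\sum_{i=1}^{k}\sigma_i u_i v_i^T$ with $\sigma_i=\sigma_i(\overline W)$. Since $\widetilde W$ and $\overline W-\widetilde W=\sum_{i>k}\sigma_i u_i v_i^T$ have mutually orthogonal row and column spaces, and since $\|\overline W\|_F=1$ by construction, one gets $\mathcal{L}_{rank}=-\|\overline W-\widetilde W\|_F^2=-(1-\|\widetilde W\|_F^2)=\big(\sum_{i=1}^{k}\sigma_i^2\big)-1$. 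Writing $s:=\sum_{i=1}^{k}\sigma_i^2\in(0,1]$, minimizing $\mathcal{L}_{rank}$ is the same as decreasing $s$, i.e.\ moving spectral mass out of the leading $k$ directions, while the total energy $\sum_i\sigma_i^2$ is pinned at $1$ by the normalization $\overline W=W/\|W\|_F$. By \Cref{delta_rank} and \Cref{best_low_rank_approximation}, for a fixed tolerance $\delta$ the $\delta$-rank of $\overline W$ exceeds $k$ precisely when $1-s=\|\overline W-\widetilde W\|_F^2>\delta^2$, so decreasing $s$ is exactly driving the $\delta$-rank up.

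\textbf{The gradient-descent step.} Using the SVD differential $d\sigma_i=u_i^T(d\overline W)v_i$ gives $\nabla_{\overline W}s=2\widetilde W$, and composing with the normalization map (whose differential at $W$ is $\|W\|_F^{-1}$ times the orthogonal projection onto the tangent space of the unit sphere at $\overline W$) gives $\nabla_W\mathcal{L}_{rank}=\tfrac{2}{\|W\|_F}\big(\widetilde W-s\,\overline W\big)$, where I used $\langle\widetilde W,\overline W\rangle=\|\widetilde W\|_F^2=s$. Hence a step $W'=W-\eta\nabla_W\mathcal{L}_{rank}$ acts block-diagonally in the SVD basis of $\overline W$: it rescales the top-$k$ singular values by $c_1=\|W\|_F-\tfrac{2\eta(1-s)}{\|W\|_F}$ and the remaining ones by $c_2=\|W\|_F+\tfrac{2\eta s}{\|W\|_F}$. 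For a small enough learning rate, and under the generic assumption $\sigma_k>\sigma_{k+1}$, both factors are positive, $c_1<\|W\|_F<c_2$, and the ordering of singular values (hence the SVD structure) survives. After renormalizing, the new leading-$k$ energy is $s'=\tfrac{c_1^2 s}{c_1^2 s+c_2^2(1-s)}<s$ (using $c_1<c_2$ and $1-s>0$); more generally, a short computation shows every tail energy obeys $\sum_{i>m}\sigma_i(\overline{W'})^2\ge\sum_{i>m}\sigma_i(\overline W)^2$, strictly for $1\le m<\operatorname{rank}(\overline W)$, since the difference factors into terms of sign $\ge0$, $<0$, $\le0$. By \Cref{best_low_rank_approximation} the quantity $\sum_{i>m}\sigma_i^2$ is the squared distance from $\overline W$ to its best rank-$m$ approximation, so by \Cref{delta_rank} the $\delta$-rank never decreases under the step and strictly increases whenever $\delta^2$ falls between the relevant tail-energy levels before and after. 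The configuration $s=1$ --- i.e.\ $W$ already of rank $\le k$ --- is a fixed point, since then $\widetilde W=\overline W$ and $\nabla_W\mathcal{L}_{rank}=0$.

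\textbf{Where the difficulty lies.} No single computation above is hard; the subtlety is the meaning of ``the rank increases''. Because the singular values vary continuously, the integer $\delta$-rank can only be shown to be \emph{monotone} under the update, jumping exactly when an accumulated tail energy crosses $\delta^2$; the sharp statement one actually proves is the monotonicity of all tail-energy partial sums together with the strict increase at the loss's own truncation level $k$ (equivalently, the strict decrease of $\mathcal{L}_{rank}$, i.e.\ the strict growth of the squared best-rank-$k$-approximation error). One also has to state the hypotheses that legitimize the SVD-basis bookkeeping in the update step --- a sufficiently small step size, and a strict gap $\sigma_k>\sigma_{k+1}$ so that the rank-$k$ truncation is well defined --- with the degenerate equal-singular-value case handled by working directly with the sum of the $k$ largest squared singular values.
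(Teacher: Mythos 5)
Your proof is correct and follows essentially the same route as the paper's: rewrite $\mathcal{L}_{rank}$ as the negative tail singular-value energy of $\overline W$, differentiate through the normalization to obtain $\nabla_W\mathcal{L}_{rank}=\tfrac{2}{\|W\|_F}\bigl(\widetilde W-s\,\overline W\bigr)$ (the paper's elementwise expression with the constant $c=2\|W\|_F(1-s)$ is exactly this), and observe that one gradient step rescales the top-$k$ singular values by a strictly smaller factor than the remaining ones. Your concluding step is in fact more careful than the paper's: where the paper asserts that the small singular values ``make up more proportion'' and hence the rank increases, you establish monotonicity of all normalized tail energies and correctly note that the integer $\delta$-rank is only guaranteed to be non-decreasing, increasing strictly when a tail energy crosses the tolerance threshold.
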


\begin{proof}
In gradient descent, the update from weight $W$ to $W^\prime$ based on rank loss $\mathcal{L}_{rank}$ could be described as:
\begin{equation}
	\label{sgd_update}
	W^\prime = W - \gamma \frac{\partial \mathcal{L}_{rank}}{\partial W}.
\end{equation}

We first simplify rank loss. Since $U=\{u_1, u_2,...\}$ and $V=\{v_1, v_2,...\}$ are orthonormal bases, we could easily rewrite rank loss $\mathcal{L}_{rank}$ as the squared sum of small singular values:  

\begin{equation}
	\label{p21}
\begin{aligned}
	\mathcal{L}_{rank} &= - \| \frac{W}{\|W\|_F} - \sum_{i=1}^{k} \sigma_i u_i v_i^T \|^2_{F}\\
	&= -\| U\Sigma V^T - U\Sigma_{[1:k]} V^T  \|^2_{F} \\
	&=  -\| \Sigma - \Sigma_{[1:k]}  \|^2_{F} = -\sum_{i=k+1}^{r} \sigma_i^2.
\end{aligned}
\end{equation}

The form \Cref{p21} allows us to apply chain rule to calculate the gradient of the normalized weight matrix $\frac{\partial \mathcal{L}_{rank}}{\partial \overline{W}}$:
 
\begin{equation}
\label{al1}
\frac{\partial \mathcal{L}_{rank}}{\partial \overline{W}} = \sum_{i=k+1}^{r} \frac{\partial \mathcal{L}_{rank}}{\partial \sigma_i} \frac{\partial\sigma_i}{\partial\overline{W}} 
= -\sum_{i=k+1}^{r} 2 \sigma_i u_i v_i^T.
\end{equation}
% \end{lemma}

Again, the chain rule is applied for the derivative of the weight matrix. For clarity, we show the gradient expression for a single weight parameter $W[m,n]$ (the weight value at position $(m,n)$ in the reshaped weight matrix $W$):

\begin{equation}
\label{al2}
\begin{aligned}
	\frac{\partial \mathcal{L}_{rank}}{\partial W[m,n]} =& tr\left( \frac{\partial \mathcal{L}_{rank}}{\partial \overline{W}^T}  \frac{\partial \overline{W}}{\partial W[m,n]}  \right) \\
 =& -\frac{(\sum_{i=k+1}^{r} 2 \sigma_i u_i v_i^T)[m,n]}{\|W\|_F} \\ & + 
  \frac{W[m,n] \sum \left( W \odot \left(\sum_{i=k+1}^{r} 2 \sigma_i u_i v_i^T\right) \right)}{\|W\|^3_F}.
\end{aligned}
\end{equation}

Based on the gradient, one step of optimization under learning rate $\alpha$ could be expressed in a neat matrix multiplication format, decomposed by orthonormal bases $U=\{u_1, u_2,...\}$ and $V=\{v_1, v_2,...\}$.

\begin{lemma}[Weight optimization based on rank loss]
Denote $\sum \left( W \odot \left(\sum_{i=k+1}^{r} 2 \sigma_i u_i v_i^T\right) \right):=c$, which is a scalar constant within each step, one step of weight $W$ optimization under learning rate $\gamma>0$ and rank loss (as defined in \Cref{rank_loss_definition}) could be expressed as:
\begin{equation}
\begin{aligned}
	W' =& W - \gamma \frac{\partial \mathcal{L}_{rank}}{\partial W} \\
	=& W - \gamma \left(- \frac{\sum_{i=k+1}^{r} 2 \sigma_i u_i v_i^T}{\|W\|_F} + \frac{c W}{\|W\|^3_F} \right)\\
	=& U\Sigma V^T + \frac{2\gamma}{\|W\|_F} U \Sigma_{[k+1:r]} V^T - \frac{c \gamma}{\|W\|^3_F} U \Sigma V^T\\
	=& U \left(\left(1-\frac{c \gamma}{\|W\|^3_F}\right)\Sigma + \frac{2\gamma}{\|W\|_F} \Sigma_{[k+1:r]} \right) V^T.
\end{aligned}
\end{equation}
\end{lemma}

From the formula of the optimized weight $W'$, we can reach the following conclusions on the optimized weight $W'$: firstly, $W'$ promises the same set of orthonormal bases $U, V$ after SVD; secondly, comparing small singular values against others, all singular values are penalized by the same amount $\frac{c \gamma}{\|W\|^3_F}$; but the small singular values (ranking from $k+1$ to $r$) are awarded with increments $\frac{2\gamma}{\|W\|_F}$. Regardless of swapped singular values due to magnitude change (because of small learning rate $\gamma$), small singular values will make up more proportion in all the singular values after one step of update. Recall the definition for $\delta$-rank, given fixed sum of squared singular values after $l_2$ normalization, the rank of $W$ will increase.
\end{proof}

\section{Method Details}
\subsection{Details on Gradient-Grow.}
We explain in details about the procedure of gradient grow evolved from RigL~\cite{prune_rigl}. At each gradient grow step, first we calculate the Rank-based Pruning (RPG) objective $\mathcal{L}$. Then we back-propagate $\mathcal{L}$ for gradients of network weights. Finally, gradients of pruned network weights are sorted in magnitudes; weights with large gradients are re-activated.

The number of weights to be re-activated are determined by the number of remaining weights. The whole pruning framework is detailed in Algorithm~\ref{alg1}. Grow fraction $\alpha$ is a function of training iterations that gradually decays for stability of training. Cosine Annealing is used for $\alpha$, following \cite{prune_rigl}.

\begin{Algorithm}[\text{Rank-based Pruning (RPG)}\label{alg1}]
	\Input{A dense model $W$ with $n$ layers; Target density (a function of iteration) $d$; Grow fraction (a function of iteration) $\alpha$; Mask update interval $\Delta T$; Total training iteration $T=T_{prune}+T_{finetune}$}
	\Output{A Sparse Model $W \odot M$}
	// Initialize a dense mask \;
	$M \gets \mathbbm{1}$\;
	// \text{Stage 1: prune and grow}\;
	\For{$t \gets 1 \text{ \textbf{to} } T_{prune}$}
	{
		\text{Forward propagation for minibatch loss $\mathcal{L}_{task}$}\;
		\If {$t \% \Delta T = 0$}
		{
			// \text{Update mask $M$}\;
			\text{Calculate and sum layerwise rank loss}\;
			\text{Keep top $d_t$ proportion of weights in $W$ globally to get layerwise density $d^i_t$}\;
			\For{$i \gets 1 \text{ \textbf{to} } n$}
			{
				\text{Prune to density $(1-\alpha_t)d_t^i$ based on $|W|$}\;
				\text{Grow to density $d_t^i$ again based on $|\nabla\mathcal{L}|$}\;}
			}
		\Else{
			\text{Train sparse net with task loss}\;
		}
	}
	// \text{Stage 2: keep masks fixed and finetune}\;
	\For{$t \gets T_{prune} + 1 \text{ \textbf{to} } T$}
	{
		\text{Keep mask $M$ static, and train weight $W$ till converge}\;
	}
	\Ret{Weights of the Pruned Model $W \odot M$}
\end{Algorithm}

\subsection{Selection of Approximation Rank} Factor $k$ of the truncation operator $\operatorname{Trun}$ controls the rank of the low-rank approximation in this adversarial process. However, controlling $k$ for each layer is hardly practical, because layers are large in quantity and vary in shapes. We leverage the concept of $\delta$-rank in Definition 1, and tend to control the approximation error (also the negative rank loss $-\mathcal{L}_{rank}$ for a layer) rather than control $k$. We set a constant $\tilde{\delta}$ between $\left(0,1 \right)$ as the target approximation error between the normalized weight matrix $\overline{W}$ and its low rank counterpart. Then we find the best $k$ that has the closest low-rank approximation error as $\tilde{\delta}$ for each layer. Mathematically, this selection process could be written as:
\begin{equation}
\label{k_select}
\mathop{\arg\min}\limits_{k} |-\mathcal{L}_{rank} - \tilde{\delta}|.
\end{equation}

\section{Experiment Details}
\subsection{CIFAR Experiments}
In the CIFAR experiments, we train VGG-19 and ResNet-32 for 300 epoch, which is identical to Zhou et al.~\cite{prune_zhou}. We use the SGD optimizer with momentum 0.9, batchsize 128, learning rate 0.1, and weight decay 0.005.

\subsection{ResNet-50 on ImageNet}
We mainly follow Goyal et al.~\cite{Goyal} for ImageNet experiments. ImageNet experiments are run on 8 NVIDIA Tesla V100s. Our RPG method is applied on ResNet-50 for 100 epoch, and compared with 100-epoch baselines at various sparsities. We use the SGD optimizer with momentum 0.9, 1024 batchsize, a learning rate of 0.4, 0.0001 weight decay,  0.1 label smoothing, 5 epochs of learning rate warmup. We choose 1024 batchsize instead of the commonly-used 4096 batchsize~\cite{prune_rigl,prune_powerprop} due to GPU memory constraints. We did not use augmentation tricks like mixup or cutmix. Standard pretrained model from torchvision is used in ImageNet setting for fair comparison with top baselines. $\alpha$ is kept at 0.3; $\Delta T$ is set to 100; $T_{prune}$ is set to 90 epochs.

\subsection{Downstream Vision Tasks}
In Mask R-CNN experiments, all methods are implemented by us on the identical settings for fair comparison. We follow the training setting of the original Mask R-CNN~\cite{maskrcnn} on Detectron-2~\cite{detectron2}. The three methods are applied for 96000 iterations. Notably, certain components in Mask R-CNN ResNet-50 FPN are loaded from pretrained models and left untrained. These components are omitted from pruning and sparsity calculations for RPG and baseline experiments.

\subsection{Vision Transformers}
In vision transformers, we mainly follow the official setting of DeiT~\cite{deit}, but we extend training epochs from 300 to 600 for fair comparison with SViT-E~\cite{prune_svite}. All other training settings are identical to the official training setting of DeiT~\cite{deit}.

\subsection{Replication of AC/DC}

AC/DC~\cite{prune_acdc} achieves State-of-the-Art performance at high sparsities. Therefore, we hope to compare our method extensively with AC/DC on various settings. Accordingly, the schedule of AC/DC need slight modifications based on the original setting. According to the original work~\cite{prune_acdc}, dense and sparse status are alternated every 5 epochs. We scale the period according to the ratio of the training epochs versus the standard training epochs in the paper~\cite{prune_acdc}. For example, if the AC/DC method is applied for 300 epochs, since the original ImageNet AC/DC setting has only 100 epochs, we lengthen the period of AC/DC from the original 5 epochs to 15 epochs. For warming-up and finetuning periods, the similar scaling is performed.

\section{Additional Experiments}
We also tried RPG pruning without loading pretrained models on ResNet-50 ImageNet experiments. Training settings are kept the same with other ResNet-50 pruning experiments. The results are provided in Table~\ref{resnet50scratch}. The proposed RPG method could surpass all baselines even without loading pretrained models.

\begin{table}[htbp]
	\centering
	\setlength{\abovecaptionskip}{0.1cm}
	\begin{tabular}{lrrr}
		\toprule
		Algorithm & \multicolumn{1}{l}{Sparsity} & \multicolumn{1}{l}{FLOPs} & \multicolumn{1}{l}{Accuracy} \\\midrule
		ResNet-50~\cite{resnet} & 0   & 1$\times$  & 76.80  \\\midrule
		RPG from scratch & 0.9  & 0.155$\times$ & 75.35 \\
		RPG from scratch & 0.95  & 0.093$\times$ & 73.62 \\\bottomrule

	\end{tabular}%
	\caption{RPG-Sparsified ResNet-50 from scratch on ImageNet.}
	\label{resnet50scratch}%
% \end{table}%
\end{table}

\end{document}